\newcommand{\tup}[1]{{\langle #1 \rangle}}
\newcommand{\strips}{\textsc{Strips}}     
\newtheorem{definition}{Definition}
\newtheorem{theorem}{Theorem}
\title{Scaling-up Generalized Planning as Heuristic Search with Landmarks\\ (extended version) }
\author{\Large \textbf{Javier Segovia-Aguas\textsuperscript{\rm 1}, Sergio Jim\'enez\textsuperscript{\rm 2}, Anders Jonsson\textsuperscript{\rm 1} and Laura Sebasti\'a \textsuperscript{\rm 2}}\\ 
}
\begin{document}

\maketitle

\begin{abstract}
Landmarks are one of the most effective search heuristics for classical planning, but largely ignored in generalized planning. Generalized planning (GP) is usually addressed as a combinatorial search in a given space of algorithmic solutions, where candidate solutions are evaluated w.r.t.~the instances they solve. This type of solution evaluation ignores any sub-goal information that is not explicit in the representation of the planning instances, causing plateaus in the space of candidate generalized plans. Furthermore, node expansion in GP is a run-time bottleneck since it requires evaluating every child node over the entire batch of classical planning instances in a GP problem. In this paper we define a landmark counting heuristic for GP (that considers sub-goal information that is not explicitly represented in the planning instances), and a novel heuristic search algorithm for GP (that we call {\sc PGP}) and that progressively processes subsets of the planning instances of a GP problem. Our two orthogonal contributions are analyzed in an ablation study, showing that both improve the state-of-the-art in {\em GP as heuristic search}, and that both benefit from each other when used in combination.
\end{abstract}

\section{Introduction}
{\em Generalized planning} (GP) addresses the computation of algorithmic solutions that are valid for a set of classical planning instances from a given domain~\cite{Winner03distill:learning,Levesque:GPlanning:IJCAI11,srivastava2011new,Zilberstein:Gplanning:icaps11,hu2011generalized,belle2016foundations,illanes2019generalized,segovia2019computing,frances2021learning}. In the worst case, each classical planning instance may require a completely different solution but in practice, many  planning domains are  known to have polynomial algorithmic solutions~\cite{helmert2006new,fern2011first}. GP is however a challenging computation task;  specifying an algorithmic solution for a set of classical planning instances often requires features that are not explicitly represented in those instances and hence, they must be  discovered~\cite{bonet2021general}. 

GP is typically addressed as a combinatorial search in a space of algorithmic solutions, where candidate solutions are evaluated w.r.t.~the instances they solve. Recently, heuristic search in the solution space of {\em planning programs} for GP has shown to be effective when guided by goal-oriented heuristic functions~\cite{javi:GP:ICAPS21}. However the used heuristics ignore sub-goal information, and often cause large search plateaus. In addition, each candidate solution is evaluated over the entire batch of classical planning instances of the GP problem, increasing the likelihood of search getting stuck in plateaus.

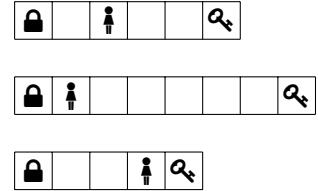
\begin{figure}[t]
    \centering
    \begin{tikzpicture}
    \draw[draw=black,step=0.5cm] (0.0,2.0) grid (3.0,2.5);
    \node at (1.25,2.25) {\LARGE\Ladiesroom};
    \node at (0.25,2.25) {\large\faLock};
    \node at (2.75,2.25) {\faKey};
    \draw[draw=black] (0.0,2) -- (3.0,2);
    
    \draw[draw=black,step=0.5cm] (0.0,1.0) grid (4.0,1.5);
    \node at (0.75,1.25) {\LARGE\Ladiesroom};
    \node at (0.25,1.25) {\large\faLock};
    \node at (3.75,1.25) {\faKey};
    \draw[draw=black] (0.0,1) -- (4.0,1);   

    \draw[draw=black,step=0.5cm] (0.0,0.0) grid (2.5,0.5);
    \node at (1.75,0.25) {\LARGE\Ladiesroom};
    \node at (0.25,0.25) {\large\faLock};
    \node at (2.25,0.25) {\faKey};  
    \end{tikzpicture}
    \caption{\small Three initial states, corresponding to three classical planning instances where an agent must open a lock in a $1\times N$ corridor.}
    \label{fig:example}
\end{figure}    

Figure~\ref{fig:example} shows three initial states that correspond to three classical planning instances where an agent must open a lock in a $1\times N$ corridor. The actions available for the agent are: {\em move} (one cell) right or left, {\em pick-up} or {\em drop} the key, and {\em open} the lock with the key. A generalized plan that solves these three instances, and that generalizes no matter the initial agent location or corridor length, can be formulated as: {\em move right until reaching the end of the corridor, pick up the key, move left until reaching the beginning of the corridor, and finally, open the lock with the key}. Please note that the only information provided by the goals of the previous instances is whether the lock is open. Relevant sub-goal information is however automatically deducible from the problem representation~\cite{hoffmann2004ordered}. For example the {\em landmarks}  in Figure~\ref{fig:lmgraph}, indicating that the agent must reach all cells of the corridor and hold the key, can automatically be extracted from the representation of the first classical planning instance illustrated in Figure~\ref{fig:example}.

\begin{figure}[t]
\centering
\footnotesize
    \begin{tikzpicture}[align=center, node distance=1cm]
    \node [draw] at (0, 0)  (a)    {agent-at(p2)};
    \node [draw] at (-2,0) (b) {lock-at(p0)};
    \node [draw] at (2,0) (c) {key-at(p5)};
    
    \node [draw] at (-1.1,-1) (e) {agent-at(p1)};
    \node [draw] at (1.1,-1) (f) {agent-at(p3)};
    
    \node [draw] at (-1.1,-2) (g) {agent-at(p0)};
    \node [draw] at (1.1,-2) (h) {agent-at(p4)};
    \node [draw] at (1.1,-3) (i) {agent-at(p5)};

    \node [draw] at (2,-4) (j) {agent-has-key()};
    \node [draw] at (-1.1,-4) (k) {unlocked()};
    
    \draw[->] (a) -- (e);
    \draw[->] (a) -- (f);
    \draw[->] (g) -- (k);       
    \draw[->] (e) -- (g);
    \draw[->] (j) -- (k);   
    \draw[->] (b) edge[bend right] (k);  
    \draw[->] (f) -- (h);
    \draw[->] (h) -- (i);
    \draw[->] (i) -- (j);
    \draw[->] (c) edge[bend left] (j);
    \end{tikzpicture}
    \caption{\small Example of a partial {\em landmark graph} for the first classical planning instance illustrated in Figure~\ref{fig:example}.}
    \label{fig:lmgraph}
\end{figure}
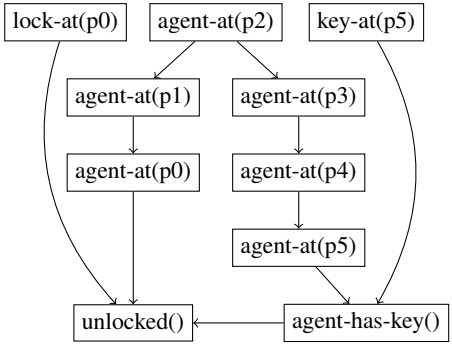    

This paper introduces two orthogonal contributions:
\begin{itemize}
    \item A {\em new heuristic search algorithm for GP}, that improves the performance of computing generalized plans by progressively evaluating them on subsets of the input planning instances.
    \item The adaptation of the {\em landmark graph} from classical planning to GP, and the definition of a {\em landmark counting heuristic} for GP, that considers sub-goal information that is not explicitly represented in the planning instances.
\end{itemize}
The performance of these two orthogonal contributions is analyzed in an ablation study, showing that both outperform {\sc BFGP}, the state-of-the-art in GP as heuristic search~\cite{javi:GP:ICAPS21}, and that both benefit from each other when used in combination.

\section{Background}

\subsection{Classical Planning}
In this work we consider the \strips{} fragment 
of the Planning Domain Definition Language (PDDL) for classical planning~\cite{haslum2019introduction}, that compactly defines a planning problem as $P = \tup{{\cal D},{\cal I}}$, where ${\cal D}$ is the planning {\em domain} and ${\cal I}$ is an {\em instance}. The {\bf domain}\footnote{We assume domain constants to be included in the set of instance objects, and reformulate $\mathcal{A}$ accordingly if required.} ${\cal D}=\tup{{\cal F},{\cal A}}$ consists of a set of FOL {\em predicates}  ${\cal F}$, each of the form $p(x_1,\ldots,x_k)$ where $p$ denotes a $k$-ary predicate symbol and $x_i$, $1\leq i\leq k$, are variables; and a set of {\em action schemes} ${\cal A}$, where each  $\alpha\in{\cal A}$ is defined as $\alpha=\tup{par_\alpha,pre_\alpha,\mathit{eff}_\alpha}$ with $par_\alpha$ denoting its {\em parameters} (arguments), and $pre_\alpha$ and $\mathit{eff}_\alpha$ are sets of atoms defined over variables in $par_\alpha$ that stand for the {\em preconditions} and the {\em effects} of the action schema $\alpha$. The {\bf instance} is defined as ${\cal I} = \tup{\Omega, I, G}$, where $\Omega$ is the finite set of world {\em objects}, $I$ is the {\em initial state}, and $G$ is the {\em goal} condition, a partial state that compactly represents the subset of goals states $S_G$. 

A {\em state} consists of all ground atoms $p(o_1,\ldots,o_k)$, with $k$-ary predicate symbols $p$ and instance objects $o_i\in\Omega$ for $1\leq i\leq k$, interpreted either {\em true} or {\em false}; a {\em partial state} is a subset of all ground atoms. The set of {\em ground actions} $A$ is computed substituting the parameters $par_\alpha$ of each action schema $\alpha\in\mathcal{A}$ with a tuple of objects $\overrightarrow{o}$ of the same size as the action parameters, i.e. a ground action $a\in A$ from an action schema $\alpha\in\mathcal{A}$ is $a = \alpha[\overrightarrow{o}]$ s.t. $|\overrightarrow{o}| = |par_\alpha|$; hence, $pre_a$ and $\mathit{eff}_a$ are partial states after grounding $pre_\alpha$ and $\mathit{eff}_\alpha$ atoms over objects $\overrightarrow{o}$. A ground action $a$ is {\em applicable} iff its preconditions hold in the current state $s$, i.e. iff $pre_a \subseteq s$. Let us first split the action effects into {\em positive} and {\em negative} that respectively interpret ground atoms to true and false after applying action $a$, i.e. $\mathit{eff}_a = \mathit{eff}^+_a\cup \mathit{eff}^-_a$. The successor state $s' = a(s)$ is built removing the negative effects, and then adding the positive action effects, i.e. $s'=(s\setminus \mathit{eff}^-_a)\cup \mathit{eff}^+_a$.

A {\em solution} to $P$ is a sequence of actions, or {\em sequential plan}, $\pi = \tup{a_1,\ldots,a_m}$, such that applied in the initial state $s_0 = I$ it induces a trajectory $\tau = \tup{s_0,a_1,s_1,\ldots,a_m,s_m}$ where each action $a_i(s_{i-1})$ is applicable, and the goal condition holds in the last state, i.e.~$G\subseteq s_m$. 

\subsubsection{Example.} The classical planning instances, with initial states illustrated in Figure~\ref{fig:example}, can be formulated with ground atoms {\tt\small \{lock-at($p_0$), key-at($p_{N-1}$), agent-at($p_i$), agent-has-key, unlocked, adjacent($p_i$,$p_{i+1}$)\}}, where $N$ is the corridor length and $\Omega=\{p_i | 0\leq i < N\}$ is the set of objects representing the different corridor locations, and four action schemes ${\cal A}=$\{{\tt\small move($x_1$,$x_2$)}, {\tt\small pickup-key($x$), drop-key($x$)}, {\tt\small open-lock($x$)}\}. The initial states of the three classical planning instances can then be represented as {\tt\small $I_1$=\{lock-at($p_0$), key-at($p_5$), agent-at($p_2$)\}, $I_2$=\{lock-at($p_0$), key-at($p_7$), agent-at($p_1$)\}, $I_3$=\{lock-at($p_0$), key-at($p_4$), agent-at($p_3$)\}}, completed with the corresponding {\tt\small adjacent($p_i$,$p_{i+1}$)} ground atoms, that are static. The goal condition is the same for the three instances: {\tt\small $G_1=G_2=G_3=\,$\{unlocked\}}.

\subsection{Landmarks in Classical Planning}
{\em Fact landmarks} were introduced for classical planning by \citeauthor{porteous2001extraction}~(\citeyear{porteous2001extraction}) as a subgoaling mechanism, which later was adopted for the first landmark-based heuristic by \citeauthor{richter2008landmarks}~(\citeyear{richter2008landmarks}). In this article we refer to {\em fact landmarks} as {\em landmarks}.

\begin{definition}[Landmark]
\label{def:landmark}
A ground atom $p(o_1,\ldots,o_k)$ is a landmark of a classical planning problem $P$ iff for every sequential plan $\pi = \tup{a_1,\ldots,a_m}$ that solves $P$, the ground atom $p(o_1,\ldots,o_k)$ holds for some state $s_i$ with $0\leq i \leq m$ of the induced trajectory $\tau = \tup{s_0,a_1,s_1,...,a_m,s_m}$.
\end{definition}

According to Definition~\ref{def:landmark}, all the facts appearing in the initial state and goals of a classical planning problem are {\em landmarks} (resp. considering time steps $i=0$ and $i=m$). Landmarks can also be formulae over the state variables and actions. For instance, {\em disjunctive landmarks} indicate that, for every sequential plan that solves $P$, one of the atoms in a given disjunction holds at some state $s_i$, {\small $0\leq i \leq m$}, in the induced trajectory $\tau$. Landmarks can  be (partially) ordered according to the time step where they must be achieved~\cite{hoffmann2004ordered,richter2008landmarks,karpas2009cost}.

In this paper we focus on {\em fact}, {\em disjunctive landmarks}, and their orderings, while considering as future work other formalisms such as {\em action landmarks}~\cite{karpas2009cost,helmert2009landmarks, buchner2021exploiting}. The two kinds of orderings we extract between landmarks are {\em natural orderings},  i.e.~a landmark is true some time before another landmark, and {\em greedy necessary orderings}, i.e.~one landmark is always true one step before another landmark becomes true for the first time~\cite{hoffmann2004ordered}.

Given a classical planning problem $P$, its corresponding {\em landmark graph}, $LG =\tup{LM,O}$, is a directed graph that comprises the set of landmarks $LM$, and the set of orderings $O$ between these landmarks. For instance, Figure~\ref{fig:lmgraph} illustrates the landmark graph of the first instance introduced in Figure~\ref{fig:example}. For clarity, static atoms indicating the adjacency of two corridor cells and natural orderings are omitted.

\subsection{Generalized Planning with Planning Programs}
This work builds on top of the inductive formalism for GP, where a GP problem is a finite set of classical planning instances from a given domain~\cite{jimenez2019review}. In more detail, we build on top of the {\em GP as heuristic search} approach~\cite{javi:GP:ICAPS21}, which represents generalized plans as {\em planning programs}. 
\begin{definition}[GP problem]
    \label{def:gp-problem}
    A {\em GP problem} $\mathcal{P}=\{P_1,\ldots,P_T\}$ is a finite and non-empty set of $T$ classical planning problems  $P_1=\tup{\mathcal{D},\mathcal{I}_1}, \ldots, P_T=\tup{\mathcal{D},\mathcal{I}_T}$ that belong to the same domain $D$, and where each instance $\mathcal{I}_t$,  {\small $1\leq t \leq T$}, may actually differ in the set of ground atoms and actions, initial state, or goals.
\end{definition}

Unlike sequential plans, {\em planning programs} include a control flow construct which allows the compact representation of solutions to classical and GP problems~\cite{segovia2019computing}.  Formally a {\em planning program} is a sequence of $n$ instructions  $\Pi=\tup{w_0,\ldots,w_{n-1}}$, where each instruction $w_i\in \Pi$ is associated with a {\em program line} {\small $0\leq i< n$}, and is either: 
\begin{itemize}
    \item A {\em planning action} $w_i\in A$.
    \item A {\em goto instruction} $w_i=\mathsf{go}(i',y)$, where $i'$ is a program line and $y$ is a proposition.
    \item A {\em termination instruction} $w_i=\mathsf{end}$. The last instruction of a planning program is always a termination instruction, i.e. $w_{n-1}=\mathsf{end}$. 
\end{itemize}

The execution model for a planning program  is a {\em program state} $(s,i)$, i.e.~a pair of a planning state $s\in S$ and program counter $0\leq i<n$. Given a program state $(s,i)$, the execution of a programmed instruction $w_i$ is defined as:
\begin{itemize}
    \item If $w_i\in A$, and $w_i$ is applicable in $s$, the new program state is $(s',i+1)$, where $s'=w_i(s)$ is the {\em successor} state. If $w_i$ is not applicable the new program state is $(s,i+1)$, i.e. the planning state is unmodified. 
    \item If $w_i=\mathsf{go}(i',y)$, the new program state is $(s,i')$ if $y$ holds in $s$, and $(s,i+1)$ otherwise. The proposition $y$ can actually be the result of an arbitrary expression on the state variables, e.g.~a state {\em feature}~\cite{lotinac2016automatic}. 
    \item If $w_i=\mathsf{end}$, program execution terminates. 
\end{itemize}

To execute a planning program $\Pi$ on a classical planning problem $P=\tup{\mathcal{D},\mathcal{I}}$, the initial program state is set to $(I,0)$, i.e.~the initial state of $P$ and the first program line of $\Pi$. A program $\Pi$ {\em solves} $P$ iff the execution terminates in a program state $(s,i)$ that satisfies the goal condition, i.e.~$w_i=\mathsf{end}$ and $G\subseteq s$. Otherwise the execution of the program fails.  The two possible sources of failure of the execution of a planning program $\Pi$ on a classical planning problem $P$ are then:
\begin{enumerate}
\item {\em Incorrect program}, i.e.~execution terminates in a program state $(s,i)$ that does not satisfy the goal condition, i.e.~($w_i=\mathsf{end})\wedge (s\notin S_G)$.
\item {\em Infinite program}, i.e.~execution enters into an infinite loop that never reaches an $\mathsf{end}$ instruction. This can be easily detected whenever a program state is duplicated.
\end{enumerate}

\begin{definition}[GP solution]
    \label{def:gp-solution}
    A {\em generalized plan} $\Pi$ is a solution to a GP problem $\mathcal{P}=\{P_1,\ldots,P_T\}$ iff, for every classical planning problem $P_t\in \mathcal{P}$, $ 1\leq t\leq T$, the sequential plan that results from executing $\Pi$ on $P_t$, i.e. $exec(\Pi,P_t)=\tup{a_1,\ldots,a_m}$, solves $P_t$.
\end{definition}

\section{Planning programs for \strips\ domains}
To build {\em planning programs} that generalize over the instances of a  \strips\ domain, we introduce the notion of {\em pointer} over the world objects, and redefine planning programs accordingly.

\begin{definition}[Pointer] A {\em pointer} $z\in Z$ is a bound variable, with finite domain $D_{z}=[0..|\Omega|)$, that indexes an object of a planning instance ${\cal I}$.
\label{def:pointer}
\end{definition}

We redefine planning programs, so {\em planning actions} $w_i\in A$ are not ground actions over the instance objects, but action schemes ${\cal A}$ instantiated over {\em pointers} in $Z$, i.e. $\alpha[\overrightarrow{z}]\in A_Z$. The execution model of planning programs is updated accordingly;  instructions $w_i=\alpha[\overrightarrow{z}]\in A_Z$ first map every pointer to its indexed object in constant time, which turns the instruction into a ground action $a = \alpha[\overrightarrow{o}]\in A$, from which the standard execution model applies. Figure~\ref{fig:actions} illustrates the relation between (i) an action schema; (ii) its instantiation over pointers; and (iii) its instantiation over objects. Pointers may be typed to address the subset of objects of the same type, although we also refer to them as pointers in the article for short.

\begin{figure}[t]
    \centering
    \begin{tikzpicture}
    \node [draw] at (0, 0)  (a)    {move($x_1$,$x_2$)};
    \node [draw] at (6, 0) (b) {move($z_1$,$z_2$)};
    \node [draw] at (2, -0.8) (d) {move($p_0$,$p_0$)};
    \node [draw] at (2, -1.6) (e) {move($p_0$,$p_1$)};
    \node [draw] at (2, -2.4) (f) {\ldots};
    \node [draw] at (2, -3.2) (g) {move($p_3$,$p_2$)};
    \node [draw] at (2, -4) (h) {move($p_3$,$p_3$)};    
    
    \draw[->, line width=0.5mm] (a) -- (b) node at (3,0.2) {\small $Z$ grounding};
    \draw[dashed] (5.45,-2.2) rectangle (6.95,-0.5);
    \node [draw] at (6.2,-1.0) {$z_1 = 3$};
    \node [draw] at (6.2,-1.7) {$z_2 = 2$};
    \node at (6.2,-2.5) {\small $Z$ indexing in};
    \node at (6.2,-2.8) {\small  current state};
    
    \path[line width=0.5mm] (a) edge (0, -4);
    \draw[->, line width=0.5mm] (0, -0.8) -- (d);
    \draw[->, line width=0.5mm] (0, -1.6) -- (e);
    \draw[->, line width=0.5mm] (0, -2.4) -- (f);
    \draw[->, line width=0.5mm] (0, -3.2) -- (g);
    \draw[->, line width=0.5mm] (0, -4) -- (h);    
    \node [draw=none, rotate=90] at (-0.2,-1.5) {\small $\Omega$ grounding};    
    
    \path[line width=0.5mm] (5.2,-0.3) edge (5.2,-1.7);
    \path[line width=0.5mm] (5.2,-1.0) edge (5.6,-1.0);
    \path[line width=0.5mm] (5.2,-1.7) edge (5.6,-1.7);
    \draw[->,line width=0.5mm] (5.2,-1.7) -- (g);
    \node at (4.2,-2.7) {\small Map $Z\rightarrow\Omega$};
    
    \end{tikzpicture}
    \caption{\small Action schema {\em move}($x_1$,$x_2$) $\in{\cal A}$, where $x_1$ and $x_2$ are free variables, for moving the agent in the domain of Figure~\ref{fig:example}. The action {\em move}($z_1$,$z_2$) $\in{\cal A_Z}$ ground over the pointers $Z=\{z_1,z_2\}$, where $z_1$ and $z_2$ are bound variables in $[0,|\Omega|)$,  that are respectively indexing objects $p_3$ and $p_2$. The set of actions {\footnotesize $\{move(p_0,p_0), move(p_0,p_1), \ldots, move(p_3,p_2), move(p_3,p_3)\}$} ground over the set of objects $\Omega=\{p_0,p_1,p_2,p_3\}$.}
    \label{fig:actions}
\end{figure}
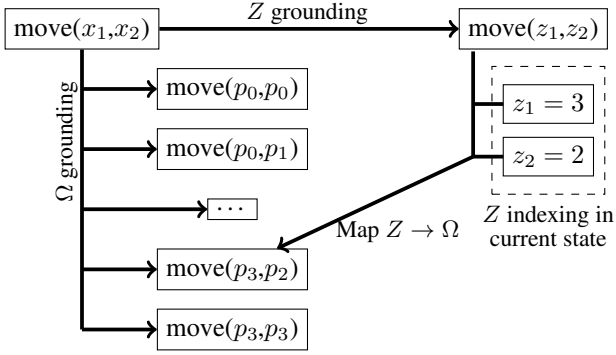

In addition, the set of instructions of planning programs is extended with the set of {\em primitive pointer operations} that comprises: $\{{\tt\small inc}(z_1)$, ${\tt\small dec}(z_1)$, ${\tt\small clear}(z_1)$, ${\tt\small set}(z_1,z_2)$ $| \; z_1,z_2 \in Z\}$ over the pointers in $Z$, and $\{{\tt\small test}_p(\overrightarrow{z}) \; |\; \overrightarrow{z} \in Z^{ar(p)} \}$ over the lists of pointers in $Z^{ar(p)}$ for each predicate symbol  $p\in{\cal F}$ in a given planning domain ${\cal D}$. Respectively these primitive instructions {\em increment}/{\em decrement} a pointer by one, set a pointer to zero, and {\em set} the value of a pointer $z_2$ to another pointer $z_1$.  Instruction ${\tt\small inc}(z_1)$ is applicable iff $z_1 < |\Omega|-1$, and ${\tt\small dec}(z_1)$ is applicable iff $z_1 > 0$, while the remaining instructions are always applicable. The ${\tt\small test}_p(\overrightarrow{z})$ instruction returns the interpretation of $p(\overrightarrow{z})$ at the current state which, similarly to planning actions, requires first to map $\overrightarrow{z}$ to the corresponding indexed objects $\overrightarrow{o}$ s.t. $p(\overrightarrow{o})$ is a ground atom. 

Last the {\em goto} instructions of planning programs are restricted to be conditioned by a single Boolean $y_z$ that, playing the role of a {\em zero} FLAGS register~\cite{dandamudi2005installing}, is dedicated to store the outcome of the last executed primitive operation over pointers. The FLAG $y_z$ allows to keeping the solution space tractable. Formally, it is defined as:
\begin{center}
$y_z \equiv 
\begin{cases} 
      False, & \text{if applicable } w_i = inc(z_1), \\
      (z_1 == 1), & \text{if applicable } w_i = dec(z_1),\\
      True, & \text{if } w_i = clear(z_1), \\
      (z_2 == 0), & \text{if } w_i = set(z_1,z_2),\\
      \neg p(\overrightarrow{z}), & \text{if } w_i = test_p(\overrightarrow{z}), \\
      True, & \text{if inapplicable } w_i. 
\end{cases}
$
\end{center}

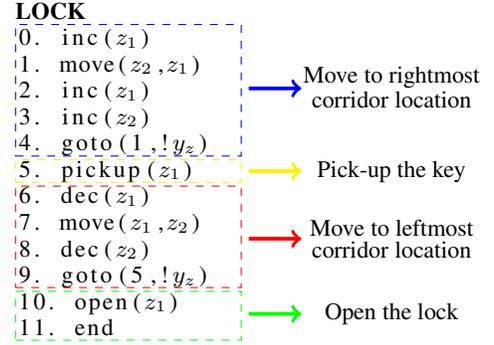
\begin{figure}[t]
\centering
    \footnotesize
        \begin{subfigure}[t]{0.30\textwidth}
            \begin{lstlisting}[mathescape]
$\textbf{LOCK}$
0. inc($z_1$)
1. move($z_2$,$z_1$)
2. inc($z_1$)
3. inc($z_2$)
4. goto(1,!$y_z$)
5. pickup($z_1$)
6. dec($z_1$)
7. move($z_1$,$z_2$)
8. dec($z_2$)
9. goto(5,!$y_z$)
10. open($z_1$)
11. end
            \end{lstlisting}
            \begin{tikzpicture}[remember picture, overlay]
            \draw[dashed,color=blue] (0,2.95) rectangle (3,4.7);
            \draw[dashed,color=yellow] (0,2.6) rectangle (3,2.9);
            \draw[dashed,color=red] (0,1.2) rectangle (3,2.55);
            \draw[dashed,color=green] (0,0.5) rectangle (3,1.15);
            \node at (5,4) {Move to rightmost};
            \node at (5,3.7) {corridor location};
            \node at (5,2.75) {Pick-up the key};
            \node at (5,2) {Move to leftmost};
            \node at (5,1.7) {corridor location};
            \node at (5,0.85) {Open the lock};
            \draw[->, line width=0.5mm, color=blue] (3.1,3.85) -- (3.8,3.85);
            \draw[->, line width=0.5mm, color=yellow] (3.1,2.75) -- (3.8,2.75);
            \draw[->, line width=0.5mm, color=red] (3.1,1.85) -- (3.8,1.85);
            \draw[->, line width=0.5mm, color=green] (3.1,0.85) -- (3.8,0.85);
            \end{tikzpicture}
        \end{subfigure} 
    \caption{\small Example of a planning program that solves the GP problem of the three classical planning instances illustrated in Figure~\ref{fig:example}. Note that $y_z$ captures the outcome of the last executed primitive pointer instruction, and that a pointer increase makes $y_z=False$ until it becomes inapplicable where $y_z=True$, while a decrease instruction makes $y_z=True$ when the pointer decreases from $1$ to $0$ or if it is inapplicable, otherwise $y_z=False$.}
    \label{fig:lock-gplan}  
\end{figure}

{\em Object typing} is naturally supported by pointers, specializing them to the number of objects of a particular type. Pointers can trivially be mapped to a mutually exclusive set of PDDL propositional variables. Likewise primitive pointer operations, and goto instructions, can be coded as PDDL actions with conditional effects.

\subsubsection{Example.} Figure~\ref{fig:lock-gplan} shows a planning program with $n=12$ program lines, and pointers $Z=\{z_1,z_2\}$, that solves the three planning instances above, and that is computed by our PGP algorithm.  Program lines 1, 5, 7 and 10 contain planning actions in $A_Z$, where pointers index the corridor locations. The program lines 4 and 9 contain {\em goto} instructions $A_{\sf go}$ that branch the program execution flow according to the value of the zero flag $y_z$. The remaining lines contain primitive pointers  instructions, that operate over pointers, and update the zero flag $y_z$ accordingly. The last instruction is an ${\tt\small end}$ instruction. The program of Figure~\ref{fig:lock-gplan} leverages the fact that, in the three classical planning  instances given as  input, adjacent locations are named with consecutive numbers.

Pointers are always initialized to zero. Therefore, the execution of the planning program of Figure~\ref{fig:lock-gplan} on
the first classical planning instance illustrated in Figure~\ref{fig:example} produces the following sequential plan $\pi=\langle move(p_0,p_1), move(p_1,p_2), move(p_2,p_3), move(p_3,p_4),$\\ $pickup(p_4), move(p_4,p_3), move(p_3,p_2), move(p_2,p_1),$ \\$move(p_1,p_0), open(p_0)\rangle$. Next we detail the execution of the first five program lines, which produces the sequence of ground actions to reach the rightmost location of the corridor; pointers $z_1$ and $z_2$ are initialized to zero, so they initially index the same object $p_0$. Program line 0 increments the value of pointer $z_1$, so it indexes $p_1$ and hence, the execution of program line 1 corresponds to the execution of the ground action $move(p_0,p_1)$. Lines 2 and 3 increment the two pointers, so $z_1$ indexes $p_2$ while $z_2$ indexes $p_1$. Line 4 indicates that the block of program lines [1$-$4] is repeated until pointer $z_2$ can no longer be incremented. 

\begin{algorithm}[t]
\footnotesize
\SetAlgoLined
\KwData{A GP problem $\mathcal{P}_{n,Z}$}
\KwResult{A planning program $\Pi$ that solves  $\mathcal{P}_{n,Z}$ (or {\em unsolvable})}
 $active \leftarrow \{P_1\}$ \;
 $open$ $\leftarrow$ insertProgram($\emptyset, \Pi_{empty}, active$)\;
 \While{$open\neq\emptyset$}{
  $\Pi\leftarrow$ extractBestProgram($open$) \;
  $childs \leftarrow$ expandProgram($\Pi$, ${\cal P}_{n,Z}$) \;
  \For{$\Pi'\in childs$}{
      \uIf{isSolution($\Pi'$,$active$)}{
        \uIf{isSolution($\Pi'$,${\cal P}_{n,Z}$)}{
            return $\Pi'$;
        }
        $P_{fail} \leftarrow$ getFirstFailed($\Pi'$,${\cal P}_{n,Z}$)\;
        $active \leftarrow active \cup \{P_{fail}\}$\;
        reevaluateQueue($open$,$active$)\;
      }
      \uIf{not isDeadEnd($\Pi',active$)}{
       $open$ $\leftarrow$ insertProgram($open, \Pi', active$)\;
      }
  }
 }
 return $unsolvable$;
 \caption{PGP}
 \label{alg:PGP}
\end{algorithm}

\section{Progressive Generalized Planning}
This section describes our {\em Progressive heuristic search algorithm for Generalized Planning} ({\sc PGP}). This algorithm adapts a {\em Best-First Search} (BFS), in the solution space of the possible planning programs that can be built with $n$ program lines and $|Z|$ pointers, so that it progressively processes the full batch of classical planning instances of a GP problem.  

\subsection{Progressive Best-First Search for GP}
The input to our {\sc PGP} algorithm is a GP problem $\mathcal{P}_{n,Z}=\{P_1,\ldots,P_T\}$. {\sc PGP} outputs a planning program $\Pi$ that solves $\{P_1,\ldots,P_T\}$, or it reports unsolvability within the maximum number of $n$ program lines and $|Z|$ pointers.  Briefly, {\sc PGP} keeps a subset of the classical planning instances called the {\em active instances}, that initially contains only the first classical planning instance of the GP problem. When {\sc PGP} finds a program that solves the full set of {\em active instances}, it validates that program on the remaining instances of the GP problem, and augments the set of {\em active instances} with the first instance for which the program fails. The procedure is repeated until {\sc PGP} finds a program that solves all the instances in the GP problem. PGP can be understood as a variant of {\em counterexample-guided search}~\cite{seipp2018counterexample}.

Algorithm~\ref{alg:PGP} shows the pseudo-code of {\sc PGP}. In more detail, in Lines 1-2, the algorithm {\bf initializes} the subset of {\em active} instances with the first planning instance $P_1\in{\cal P}_{n,Z}$, and
inserts an empty planning program $\Pi_{empty}$ with $n$ undefined program lines, into the empty $open$ priority queue. The main loop runs in Lines 3-16, while there are programs to expand, and it returns a GP solution if found (Line 9), otherwise it returns $unsolvable$ on Line 17. In every iteration, the best-evaluated program $\Pi$ is {\bf selected} and removed from the $open$ priority queue (Line 4), and {\bf expanded} (Line 5). For each child node $\Pi'$, PGP checks whether $\Pi'$ solves the $active$ instances (Line 7). If $\Pi'$ also solves ${\cal P}_{n,Z}$ then it is returned as the GP solution (Lines 8-9), otherwise the first instance $P_{fail}$ where $\Pi'$ fails is added to the $active$ set, and the $open$ queue is {\bf reevaluated} (Lines 10-12) w.r.t. the problems in the augmented {\em active} set. In Lines 13-14, the child $\Pi'$ is evaluated over the set of $active$ instances and {\bf inserted} in the $open$ queue if it is not a {\bf deadend} for the $active$ set.

{\sc PGP} is a {\em frontier search} algorithm meaning that, to reduce memory requirements, it stores only the {\em open list} of generated nodes but not the {\em closed list} of expanded nodes~\cite{korf2005frontier}. With regard to node expansion, let $\Pi$ be the partially specified program that corresponds to the best-evaluated node extracted by {\sc PGP} from the open list (Line 4). {\sc PGP} generates one successor for each program that results from programming the first undefined line of $\Pi$. This node expansion procedure guarantees that duplicate successors are not generated and it keeps the branching factor tractable; at a given undefined program line {\sc PGP} can only program (i) a planning action;  (ii) a primitive pointer instruction; or (iii) a $\mathsf{goto}$ instruction. The maximum depth of the {\sc PGP} search tree is the number of program lines $n$, since only an undefined line can be programmed.

\begin{theorem}[Termination]
The execution of {\sc PGP} on a GP problem, always terminates.
\end{theorem}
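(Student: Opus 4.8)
The plan is to show that \textsc{PGP} performs a best-first search over a \emph{finite} tree and that every operation executed inside a single iteration of the main loop terminates; these two facts together bound the total work and hence force termination.

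First I would establish that the space of candidate programs is finite. A program has exactly $n$ lines, and each line can be filled by only finitely many instructions: an action schema from the (finite) domain instantiated over the (finite) tuple of pointers $Z$, yielding finitely many elements of $A_Z$; one of the finitely many primitive pointer operations $\mathtt{inc},\mathtt{dec},\mathtt{clear},\mathtt{set}$ together with the tests $\mathtt{test}_p$ over the finite sets of predicates and pointers; a goto $\mathsf{go}(i',y_z)$ with $i'$ ranging over the $n$ lines and the single flag $y_z$; or $\mathsf{end}$. Hence there is a fixed finite instruction alphabet $\Sigma$, and at most $|\Sigma|^n$ fully specified programs.

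Second I would argue that \textsc{PGP} explores these programs as a finite tree rather than a graph. The expansion rule programs the \emph{first undefined line} of the extracted partial program, producing one child per admissible instruction. Consequently, every partial program is identified with the sequence of choices made for lines $0,1,\ldots$ taken in order, so no partial program is generated by two different expansions: the search space is a tree whose depth is at most $n$ (each expansion programs one previously undefined line) and whose branching factor is at most $|\Sigma|$. This tree therefore has finitely many nodes, and since \textsc{PGP} is a \emph{frontier search} that removes a node from \emph{open} when it is expanded and never regenerates it, each node is inserted into \emph{open} at most once. The number of iterations of the main loop (Lines 3--16) is thus bounded by the number of tree nodes. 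I would then check that a single iteration does only finite work: expansion touches at most $|\Sigma|$ children; evaluating a program on one instance $P_t$ (inside \texttt{isSolution} and \texttt{getFirstFailed}) halts because the set of program states $(s,i)$ is finite and the infinite-program detection declares failure as soon as a program state repeats; and the \emph{active} set starts as $\{P_1\}$, only grows, always remains a subset of the $T$ instances, so it is augmented at most $T-1$ times while each \texttt{reevaluateQueue} call reprocesses a finite \emph{open} list.

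The main obstacle is the second step. Because \textsc{PGP} keeps no closed list, one cannot appeal to the usual ``no state is expanded twice'' guarantee of graph search; the argument instead hinges on the specific expansion rule, which forces the search space to be a tree and thereby rules out both duplicate generation and cycles without an explicit closed list. A secondary point not to overlook is that program \emph{evaluation} could a priori loop forever; this is excluded only by the duplicate-program-state detection of the execution model, so I would make that dependence explicit when invoking \texttt{isSolution} and \texttt{getFirstFailed}.
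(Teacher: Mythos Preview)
Your proposal is correct and shares the paper's core argument: the expansion rule (each child programs exactly one previously undefined line) forces the search space to be a finite tree with no duplicate nodes, so the frontier search exhausts \emph{open} after finitely many iterations. The paper's proof stops there; you go further and explicitly verify that every per-iteration operation halts---in particular that program evaluation on an instance terminates thanks to the duplicate program-state detection, and that the \emph{active} set and \texttt{reevaluateQueue} are bounded by $T$---points the paper leaves implicit. Your version is thus the same approach carried out with more care, not a different route.
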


\begin{proof}
The only possible cause for non-termination would be that {\sc PGP} could generate duplicate search nodes, allowing the infinite re-opening of an already discarded program. By the definition of the {\sc PGP} expansion procedure, it does not generate duplicate successors; a child node always has one more line programmed than its parent. 
\end{proof}

\begin{theorem}[Completeness]
Given a GP problem, and a maximum number of program lines $n$ and pointers $|Z|$, if there is a solution planning program  within these bounds then PGP can compute it.
\end{theorem}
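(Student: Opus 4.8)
The plan is to show that PGP is \emph{exhaustive} over the finite search tree of candidate programs, and that the active-instance mechanism never prunes a node lying on a path to a genuine solution. First I would fix the search space: with a bounded number of program lines $n$, a bounded number of pointers $|Z|$, and a fixed domain ${\cal D}$, the set of instructions that can be written on any single line is finite (finitely many action schemes instantiated over $Z$, finitely many primitive pointer operations and tests, and finitely many $\mathsf{go}$ targets and flag conditions). Since the PGP expansion programs exactly the first undefined line of the selected program, the reachable candidate programs form a finite tree rooted at $\Pi_{empty}$, of depth at most $n$, and --- by the no-duplicate property already established in the Termination theorem --- each program is generated at most once.

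Next I would establish the key soundness-of-pruning lemma: if a partial program $\Pi'$ is a deadend with respect to the current $active$ set, then it is a deadend with respect to the whole GP problem ${\cal P}_{n,Z}$. This holds because $active \subseteq {\cal P}_{n,Z}$ throughout the run, so any completion of $\Pi'$ that solved ${\cal P}_{n,Z}$ would in particular solve every instance of $active$; contrapositively, if no completion solves $active$ then none solves ${\cal P}_{n,Z}$. Consequently, neither the deadend test in Lines 13--14 nor the removal performed by \texttt{reevaluateQueue} in Line 12 (which only discards programs that have become deadends for the augmented $active$ set) ever eliminates a partial program that admits a completion solving ${\cal P}_{n,Z}$.

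I would then apply this lemma to a hypothetical solution $\Pi^\ast$ assumed to exist within the bounds $n$, $|Z|$. Every prefix of $\Pi^\ast$ --- the partial program whose first $k$ lines coincide with $\Pi^\ast$ and whose remaining lines are undefined --- admits $\Pi^\ast$ itself as a completion solving all instances, hence solving every possible $active$ set; therefore no prefix of $\Pi^\ast$ is ever a deadend, and none is pruned. Moreover each prefix of length $k+1$ is produced from the prefix of length $k$ by the expansion step that writes $\Pi^\ast$'s $k$-th instruction on the first undefined line, so the whole chain from $\Pi_{empty}$ up to $\Pi^\ast$ is reachable in the tree. Because the tree is finite and every inserted, non-pruned node is eventually extracted and expanded (a priority queue over a finite, duplicate-free set empties only after all its elements have been processed), PGP either returns some solution earlier or eventually generates $\Pi^\ast$; at that point \texttt{isSolution}$(\Pi^\ast,active)$ and \texttt{isSolution}$(\Pi^\ast,{\cal P}_{n,Z})$ both succeed and $\Pi^\ast$ is returned on Line 9. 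In particular PGP cannot reach the $unsolvable$ return of Line 17 while a solution exists, since that line is reached only once $open$ is empty, i.e.~after all non-pruned nodes (including every prefix of $\Pi^\ast$) have been expanded without a solution being found.

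The main obstacle I expect is precisely the soundness-of-pruning lemma, because it must reconcile the dynamically growing $active$ set with the correctness of discarding nodes: the argument hinges on the monotonicity observation that being a deadend for a subset of instances implies being a deadend for the full problem, so that the progressive evaluation can only prune nodes that were doomed anyway. The remaining ingredients --- finiteness of the program tree and the exhaustiveness of best-first search over it --- are routine once duplicate generation is ruled out, and the particular heuristic used only affects the order of expansion, not whether a reachable solution is eventually generated.
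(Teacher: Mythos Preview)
Your proposal is correct and follows essentially the same approach as the paper: the core idea in both is that a node is discarded only when its partial program already fails on some (active) instance, and hence every completion of it must fail on that same instance, so no ancestor of a genuine solution is ever pruned. The paper's proof is a two-sentence sketch of exactly this soundness-of-pruning observation, whereas you additionally spell out the finiteness of the program tree and the exhaustiveness of best-first search over it; these are implicit in the paper via the preceding Termination theorem, so your version is more thorough but not a different route.
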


\begin{proof}
{\sc PGP} only discards a search node when its corresponding partially specified planning program fails to solve a planning instance. This is precisely the definition for not being a GP solution. Further, any planning program that could be built programming the remaining undefined program lines would also fail to solve that same instance.
\end{proof}

\begin{theorem}[Soundness]
If the execution of {\sc PGP} on a GP problem outputs a generalized plan $\Pi$, then this means that $\Pi$ is a solution for that GP problem.
\end{theorem}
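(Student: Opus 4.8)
The plan is to show that whenever {\sc PGP} returns a program on Line~9, that program has been verified to solve the \emph{entire} GP problem, not merely the active subset. The key observation is that the return statement on Line~9 is guarded by the nested conditional on Lines~7--8: the outer test \texttt{isSolution}($\Pi'$,$active$) and, critically, the inner test \texttt{isSolution}($\Pi'$,${\cal P}_{n,Z}$). Since $\Pi'$ is only returned when this inner test succeeds, the soundness claim reduces to establishing that \texttt{isSolution}($\Pi'$,${\cal P}_{n,Z}$) returning \emph{true} is equivalent to $\Pi'$ being a GP solution in the sense of Definition~\ref{def:gp-solution}.

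First I would unfold what \texttt{isSolution}($\Pi'$,${\cal P}_{n,Z}$) checks: by the execution model for planning programs, it verifies that for every classical planning problem $P_t\in{\cal P}_{n,Z}$, the execution $exec(\Pi',P_t)$ terminates in a program state $(s,i)$ with $w_i=\mathsf{end}$ and $G_t\subseteq s$, i.e.\ that the induced sequential plan solves $P_t$. This is exactly the condition of Definition~\ref{def:gp-solution}, so a successful test is by definition a GP solution. Second, I would note that at the point of return, $\Pi'$ is a \emph{fully specified} program: since expansion programs one previously undefined line per child and only undefined lines can be programmed, any $\Pi'$ that executes to completion and reaches an $\mathsf{end}$ instruction on every instance has a well-defined execution, so the two failure modes (\emph{incorrect} and \emph{infinite} program) are both excluded by the successful test on all $T$ instances.

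The remaining step is to confirm that {\sc PGP} never returns on any other line. Inspecting Algorithm~\ref{alg:PGP}, the only other \texttt{return} is on Line~17, which yields \emph{unsolvable} rather than a program, so it cannot produce a spurious generalized plan. Hence every returned program satisfies Definition~\ref{def:gp-solution}, which is precisely soundness.

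I expect the only subtle point, rather than a genuine obstacle, to be making explicit that passing \texttt{isSolution} on the full batch ${\cal P}_{n,Z}$ supersedes the weaker active-set test and rules out both sources of execution failure simultaneously; once that equivalence with Definition~\ref{def:gp-solution} is spelled out, soundness follows immediately from the guard structure of the algorithm.
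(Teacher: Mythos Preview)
Your proposal is correct and follows essentially the same approach as the paper: both arguments reduce soundness to the observation that the only line returning a program (Line~9) is guarded by \texttt{isSolution}($\Pi'$,${\cal P}_{n,Z}$), which is precisely Definition~\ref{def:gp-solution}, while the only other exit (Line~17) returns \emph{unsolvable}. Your write-up is simply more explicit than the paper's three-sentence version; the one superfluous remark is the claim that $\Pi'$ must be \emph{fully specified} at return time---this need not hold (undefined lines may simply be unreachable on every $P_t$) and is not required for soundness, since passing \texttt{isSolution} on all instances already excludes both failure modes regardless of whether unreached lines remain undefined.
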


\begin{proof}
{\sc PGP} runs until (i) the open list is empty, which means that search space is exhausted without finding a solution and no generalized plan is output; or (ii) {\sc PGP} extracted from the open list a planning program whose execution solves all the instances $P_t\in\mathcal{P}_{n,Z}$. This is precisely the definition of a solution for a GP problem.
\end{proof}

\section{Landmarks in Generalized Planning}
This section defines a landmark counting heuristic for guiding a combinatorial search in our GP solution-space.

\subsection{The landmark graph with pointer assignments}
For each classical planning problem $P_t\in \mathcal{P}_{n,Z}$, we enrich its corresponding landmark graph $LG_t$ with: (i) {\em pointer landmarks}, that indicate pointer assignments that must be achieved by a solution to $P_t$; and (ii) orderings between {\em pointer landmarks} and the regular landmarks, computed by the {\sc LAMA} algorithm. 

First we compute, for every classical planning problem $P_t\in \mathcal{P}_{n,Z}$, {\small $1\leq t \leq T$}, its corresponding {\em landmark graph}  $LG_t=\tup{LM_t,O_t}$, as a pre-processing step. We implement the back-chaining {\sc LAMA} algorithm for finding landmarks and orderings between landmarks~\cite{richter2010lama}. Briefly we start from a set of known landmarks, and find new landmarks that hold in any plan before an already known landmark may become true. The procedure stops when no more new landmarks are discovered. In more detail, given a landmark $q$, if all actions that achieve $q$ for the first time share a precondition $p$, this means that $p$ is also a landmark, and that there is a {\em greedy necessary} ordering $p \rightarrow_{gn} q$ between them. The algorithm  discovers {\em disjunctive landmarks} too; when $q$ is a landmark, and all actions that first achieve $q$ have $p_1$, or $p_2$, \ldots, or $p_k$ as a precondition, this means that  $p_1\vee p_2\vee\ldots\vee p_k$ is also a landmark. We also implement the algorithm for adding {\em additional orderings from restricted Relaxed Planning Graphs} (RPGs) to discover  natural orderings between landmarks~\cite{richter2010lama}.

\begin{definition}[Pointer landmarks]
\label{def:plandmark}
Given a classical planning problem $P$ and a set of pointers $Z$, we say that the assignments $\bigwedge_j\bigvee_i (z_i=o_j)$ are {\em pointer landmarks} iff the ground atom $p(\overrightarrow{o})$ is a landmark for $P$, $z_i\in Z$, and $o_j\in \overrightarrow{o}$.
\end{definition}

In other words given a landmark $p(\overrightarrow{o})$, a {\em pointer landmark} indicates that at least one pointer must point to each object in $\overrightarrow{o}$. {\em Pointer landmarks}, and their corresponding orderings, are computed as follows. For every {\em greedy necessary} order $p \rightarrow_{gn} q$ in the landmark graph $LG_t$ computed by the LAMA algorithm, if an action $a\equiv \alpha(\overrightarrow{o})$ is an achiever for $q$, then we have that the assignments $\bigwedge_j\bigvee_i (z_i=o_j)$ are pointer landmarks, and furthermore, $\bigwedge_j\bigvee_i  (z_i=o_j)\rightarrow_{gn} q$ is a {\em greedy necessary} ordering where $\{z_i\} \subseteq Z$ are the pointers of the same type of the corresponding object $o_j\in\overrightarrow{o}$. For example given the first instance illustrated in Figure~\ref{fig:example} and the set of pointers $Z=\{z_1,z_2\}$, then the corresponding {\em landmark graph} of Figure~\ref{fig:lmgraph} is enriched with the disjunctive landmark $(z_1=5\vee z_2=5)$ and the greedy necessary order $(z_1=5\vee z_2=5)\rightarrow_{gn}$ {\small\tt agent-has-key}, among other orderings. These particular pointer landmarks, and their {\em greedy necessary} ordering, are created since the actions {\small\tt pickup}$(z_1)$ or {\small\tt pickup}$(z_2)$ for $z_1=5$ or $z_2=5$, are first achievers of the {\small\tt agent-has-key} landmark. 

\subsection{The landmark counting heuristic for GP}
The  {\em landmark graph} extracted from a given classical planning instance can be used to guide a forward search in the space of  states reachable from the initial state of that instance~\cite{richter2010lama,buchner2021exploiting}. For example, to implement the evaluation function $f_{LM}(s,\pi)$ of the {\sc LAMA} planner, that computes the number of landmarks that have not been achieved on the path from the initial state to the state $s\in S$, given by the sequence of planning actions $\pi$.  This evaluation function is formalized as: 
\begin{displaymath}
f_{LM}(s,\pi)=|(LM\setminus Reached(s,\pi))\cup ReqAgain(s,\pi)|,
\end{displaymath}
where $LM$ is the set of landmarks discovered with the previously described mechanism, $Reached(s,\pi)\subseteq LM$ is the subset of reached landmarks; a landmark $p$ is first reached in state $s$ if all predecessors of $p$ in the corresponding landmark graph have been reached, and $p\in s$. Once a landmark is reached in a state $s$, it remains reached in all successor states. Last,  $ReqAgain(s,\pi)\subseteq Reached(s,\pi)$ is the subset of landmarks that must be achieved again that comprises: goals $p$ that are false in $s$, and greedy necessary predecessors $p$ of some landmark $q$ that has not been reached yet. Note that $f_{LM}(s,\pi)$ is not a heuristic function in the standard sense, since its value is path-dependent, but it works well for classical planning.

Next we show how we leverage landmarks to inform a combinatorial search in our GP solution-space.  When a program execution terminates because an unspecified program line is reached, we retrieve the last state reached, and estimate how far this state is from a goal state with the following heuristic function:

\begin{itemize}
    \item $f_{LM}(\Pi,{\cal P}_{n,Z})=\sum_t f_{LM}(\Pi,P_t)$ for each $P_t\in{\cal P}_{n,Z}$, where $f_{LM}(\Pi,P_t) =f_{LM}(s,\pi)$ such that: 
\begin{itemize}
\item $\pi = exec(\Pi,P_t)$ is the sequential plan that results from executing $\Pi$ on $P_t$,
\item  $s$ is the last state reached after executing $\Pi$ on $P_t$,
\item $f_{LM}(s,\pi)$ is the landmark counting heuristic defined above.
\end{itemize}    
\end{itemize}

Note that, when used in combination with our PGP search algorithm, the $f_{LM}(\Pi,{\cal P}_{n,Z})$ heuristic is  evaluated only over the $active$ set, i.e. $f_{LM}(\Pi,active)$, instead of over the full GP problem ${\cal P}_{n,Z}$ which allows to saving computations.

A potential issue with our landmark counting heuristic  is that aggregating each $f_{LM}(\Pi,P_t)$ could make instances with more landmarks bias the search. This issue could be mitigated normalizing the heuristic values with the total number of landmarks; however, we have not observed this issue to have an impact in the experiments. 

\begin{table*}[t!]
\scriptsize
    \centering
    \resizebox{\textwidth}{!}{
    \begin{tabular}{|l|c||c|c|c|c||c|c|c|c||c|c|c|c||c|c|c|c|}\hline 
         \multirow{2}{*}{} & \multirow{2}{*}{$n$, $|Z|$} & \multicolumn{4}{c||}{Baseline - BFS$(f_{GC})$} & \multicolumn{4}{c||}{Contribution 1 - BFS($f_{LM})$} & \multicolumn{4}{c||}{Contribution 2 - PGP$(f_{GC})$}& \multicolumn{4}{c|}{Contributions 1\&2 - PGP$(f_{LM})$}   \\\cline{3-18}
         & & T & M & Ex & Ev & T & M & Ex & Ev & T & M & Ex & Ev & T & M & Ex & Ev \\\hline 
         Baking & 13, 6 & TE & TE & TE & TE & TE & TE & TE & TE & ME & ME & ME & ME & {\bf 98} & {\bf 46} & {\bf 48K} & {\bf 1.4M}\\
         Corridor & 11, 2 & 101 & {\bf 27} & {\bf 6K} & 120K () & 45 & 34 & 7K & {\bf 114K} & {\bf 12} & 50 & {\bf 6K} & 120K & 16 & 57 & {\bf 6K} & 121K  \\
         Gripper & 8, 4 & 5 & {\bf 15} & {\bf 3K} & {\bf 63K} & 43 & 20 & 19K & 339K & {\bf 1} & 28 & {\bf 3K} & {\bf 63K} & 8 & 20 & 19K & 336K \\
         Intrusion & 9, 1 & 94 & 284 & 73K & 1.3M & {\bf 0} & {\bf 18} & {\bf 8} & {\bf 190} & 24 & 877 & 74K & 1.3M & {\bf 0} & {\bf 18} & {\bf 8} & {\bf 190}   \\ 
         Lock & 12, 2 & TE & TE & TE & TE & TE & TE & TE & TE & ME & ME & ME & ME & {\bf 3} & {\bf 46} & {\bf 1K} & {\bf 30K} \\
         Ontable & 11, 3 & TE & TE & TE & TE & TE & TE & TE & TE & {\bf 25} & 196 & 10K & 366K & 323 & {\bf 98} & {\bf 3K} & {\bf 113K} \\
         Spanner & 12, 5 & TE & TE & TE & TE & TE & TE & TE & TE & TE & TE & TE & TE  & {\bf 172} & {\bf 604} & {\bf 25K} & {\bf 705K} \\
         Visitall & 7, 2  & {\bf 0} & {\bf 7} & 50 & 511 & 7 & 17 & {\bf 27} & {\bf 249} & {\bf 0} & {\bf 7} & 88 & 900 & {\bf 0} & 17 & 49 & 446  \\\hline
    \end{tabular}
    }
    \caption{\small Number of program lines $n$ and pointers $|Z|$, total time (T) in seconds, memory peak (M) in MB, expanded (Ex) and evaluated (Ev) nodes (K is $10^3$ and M is $10^6$). TE and ME stand for time and memory exceeded. Best results in bold.}
    \label{tab:synthesis}
\end{table*}

\section{Evaluation}
We compare on eight STRIPS domains the performance of PGP($f_{GC}$), our algorithm guided by our landmark counting heuristic for GP, w.r.t the {\em GP as heuristic search} approach~\cite{javi:GP:ICAPS21} that serves as a baseline. We also report an ablation study of our two orthogonal contributions. All experiments were performed using $10$ random input instances of increasing difficulty per domain, in an Ubuntu 20.04 LTS, with AMD® Ryzen 7 3700x 8-core processor and 32GB of RAM, with a 1 hour time bound \footnote{Repository at \url{https://github.com/aig-upf/pgp-landmarks}.}.

\begin{figure}[t!]
    \begin{scriptsize}
\begin{subfigure}[t]{0.2\textwidth}
    \begin{lstlisting}[mathescape]
$\textbf{GRIPPER}$
0. pick($z_b$,$z_{r1}$,$z_g$)
1. inc($z_{r2}$)
2. move($z_{r1}$,$z_{r2}$)
3. drop($z_b$,$z_{r2}$,$z_g$)
4. move($z_{r2}$,$z_{r1}$)
5. inc($z_b$)
6. goto(0,$\neg y_z$)
7. end 

$\textbf{INTRUSION}$
0. recon($z_h$)
1. break-into($z_h$)
2. clean($z_h$)
3. gain-root($z_h$)
4. download-files($z_h$)
5. steal-data($z_h$)
6. inc($z_h$)
7. goto(0,$\neg y_z$)
8. end       

$\textbf{ONTABLE (BLOCKS)}$
0. unstack($z_{o1}$,$z_{o2}$)
1. inc($z_{o2}$)
2. goto(0,$\neg y_z$)
3. put-down($z_{o1}$)
4. clear($z_{o2}$)
5. inc($z_{o1}$)
6. goto(0,$\neg y_z$)
7. clear($z_{o1}$)
8. inc($z_{o3}$)
9. goto(0,$\neg y_z$)
10. end


$\textbf{VISITALL}$
0. visit($z_i$,$z_j$)
1. inc($z_i$)
2. goto(0,$\neg y_z $)
3. clear($z_i$)
4. inc($z_j$)
5. goto(0,$\neg y_z$)
6. end
    \end{lstlisting}
        \end{subfigure} 
\begin{subfigure}[t]{0.2\textwidth}    
    \begin{lstlisting}[mathescape]    
$\textbf{BAKING}$
0. putegginpan($z_e$,$z_p$)
1. putflourinpan($z_f$,$z_p$)
2. mix($z_e$,$z_f$,$z_p$)
3. putpaninoven($z_p$,$z_o$)
4. bakecake($z_o$,$z_p$,$z_c$)
5. removepanfromoven($z_p$,$z_o$)
6. cleanpan($z_p$,$z_s$)
7. inc($z_c$)
8. inc($z_e$)
9. inc($z_f$)
10. inc($z_s$)
11. goto(0,$\neg y_z$)
12. end 

$\textbf{CORRIDOR}$
0. inc($z_1$)
1. move($z_2$,$z_1$)
2. inc($z_1$)
3. inc($z_2$)
4. goto(1,$\neg y_z$)
5. move($z_1$,$z_2$)
6. set($z_1$,$z_2$)
7. dec($z_2$)
8. test(goal-at($z_1$))
9. goto(4,$y_z$)
10. end

$\textbf{SPANNER}$
0. pickup_spanner($z_{l1}$,$z_s$,$z_m$)
1. tighten_nut($z_{l1}$,$z_s$,$z_m$,$z_n$)
2. inc($z_n$)
3. inc($z_s$)
4. goto(0,$\neg y_z$)
5. inc($z_{l1}$)
6. walk($z_{l2}$,$z_{l1}$,$z_m$)
7. clear($z_n$)
8. clear($z_s$)
9. inc($z_{l1}$)
10. goto(0,$\neg y_z$)
11. end
    \end{lstlisting} 
        \end{subfigure}   
    \end{scriptsize}
    \caption{\small Generalized plans computed by $PGP(f_{LM})$. The parameters of non-goto instructions are {\em pointers}. Goto instructions have two parameters; the destination line, and a condition over the zero flag (either $y_z$ or $\neg y_z$).}
    \label{fig:synthesis}
\end{figure}

\subsubsection{Benchmarks.} The {\em Baking} domain, where an agent follows the steps to bake a set of cakes. {\em Corridor}, where an agent moves from an arbitrary initial location to a destination location in a corridor. {\em Gripper}, where a robot must pick all balls from room A and drop them in room B.  {\em Intrusion}, where an attacker performs a number of actions to steal data from some host computers. {\em Lock}, the domain illustrated in Figure~\ref{fig:example}. {\em Ontable}, in which towers of blocks are placed on the table.  {\em Spanner}, where an agent must pick up spanners to tighten the loose nuts at the end of a corridor (spanners can only be used once and the agent cannot go back, introducing dead-ends). {\em Visitall}, where starting from the bottom-left corner of a square grid, an agent must visit all grid cells. 

\subsubsection{Synthesis and ablation study.} 
In the first experiment we use as a baseline the {\em GP as heuristic search} approach~\cite{javi:GP:ICAPS21} that implements a Best-First Search (BFS) guided by its best single heuristic; a Euclidean distance (which actually acts as a {\em counter of unachieved goals} in propositional domains) and that we denote as $f_{GC}$. Table~\ref{tab:synthesis} reports the best solutions found in terms of the number of required program lines $n$ and pointers $|Z|$. The results show that BFS$(f_{GC})$ works well for {\em Visitall} and {\em Gripper}, where tracking the achieved problem goals provides a monotonic  measure of search progress; however, it becomes insufficient when explicit goals do not provide such information (e.g.~in {\em Baking}, {\em Intrusion}, {\em Lock} or {\em Spanner}).  The benefits of combining our two orthogonal contributions are represented by $PGP(f_{LM})$, where our landmark counting heuristic $f_{LM}$ is used to inform our $PGP$ algorithm. $PGP(f_{LM})$ solves all domains within five minutes (approx.) including the pre-processing time of all the landmark graphs. Figure~\ref{fig:synthesis} shows the generalized plans computed by $PGP(f_{LM})$; the solution for the {\em lock} domain was already shown in Figure~\ref{fig:lock-gplan}. We successfully validated all these solutions on large instances.

Our two orthogonal contributions are also evaluated separately in an {\em ablation study}, where we either ablate the contributed PGP algorithm which is the case for BFS($f_{LM}$), or ablate the contributed heuristic $f_{LM}$ which is the case for PGP($f_{GC}$). BFS($f_{LM}$), has the same coverage as the baseline. Its main drawback is that aggregating landmarks over all input instances may bias search towards certain regions of the space of planning programs where no solution generalized plan exists, e.g. consuming the first program line with an instruction that reaches a new landmark but that must be actually used some lines after invalidating the rest of the program. On the other hand, PGP($f_{GC}$) outperforms the baseline; its heuristic is evaluated only in the subset of $active$ instances, improving the coverage and total time of the baseline, but still it suffers from large plateaus due to the poorly informed heuristic. 

As a rule of thumb, $f_{LM}$ is not better than $f_{GC}$ in domains where the explicit problem goals are already providing an informative notion of search progress; the {\em Gripper} domain is a representative example of this. On the other hand, PGP exploits the fact that, in many domains, computing a succinct solution that generalizes to a small set of instances is generalizing to unseen problems. Therefore, PGP will perform worse than BFS when the programs that successfully solve the problems of the {\em active} set successively fail to generalize to the remaining problems.

\subsubsection{Synthesis with $f_1$ for tie breaking.} In \citeauthor{javi:GP:ICAPS21}~(\citeyear{javi:GP:ICAPS21}), $f_{GC}$ is also used in combination with the structural evaluation function $f_1(\Pi)$, that counts the number of goto instructions in a planning program $\Pi$, and that is used for tie breaking.
In this experiment we evaluate this same tie breaking with our contributions, i.e. $PGP(f_{LM},f_1)$, and compare it with the best original setting in \citeauthor{javi:GP:ICAPS21}~(\citeyear{javi:GP:ICAPS21}), i.e. BFS($f_{GC}$,$f_1$) in our subset of propositional domains. Results in Table~\ref{tab:combined_synthesis} show that  $PGP(f_{LM},f_1)$ also outperforms the {\em state-of-the-art} in {\em GP as heuristic search}, BFS($f_{GC}$,$f_1$), in almost all domains.

\begin{table}[hbtp]
\scriptsize
    \centering
    \resizebox{\columnwidth}{!}{
    \begin{tabular}{|l||c|c|c|c||c|c|c|c|}\hline 
        \multirow{2}{*}{} &  \multicolumn{4}{|c||}{BFS($f_{GC}$,$f_1$)} & \multicolumn{4}{c|}{PGP($f_{LM}$,$f_1$)} \\\cline{2-9}
        & T & M & Ex & Ev & T & M & Ex & Ev \\\hline
        Baking & TE & TE & TE & TE & {\bf 63} & {\bf 35} & {\bf 30K} & {\bf 937K} \\
        Corridor & 49 & {\bf 16} & {\bf 3K} & 67K & {\bf 8} & 43 & {\bf 3K} & {\bf 62K} \\
        Gripper & {\bf 4} & {\bf 14} & {\bf 3K} & {\bf 59K} & 8 & 18 & 19K & 336K \\
        Intrusion & 55 & 183 & 51K & 896K & {\bf 0} & {\bf 18} & {\bf 8} & {\bf 190} \\
        Lock & TE & TE & TE & TE & {\bf 3} & {\bf 45} & {\bf 1K} & {\bf 26K} \\
        Ontable & TE & TE & TE & TE & {\bf 312} & {\bf 97} & {\bf 3K} & {\bf 110K} \\
        Spanner & TE & TE & TE & TE & {\bf 168} & {\bf 604} & {\bf 24K} & {\bf 670K} \\
        Visitall & {\bf 0} & {\bf 7} & {\bf 45} & 496 & {\bf 0} & 17 & 53 & {\bf 458} \\\hline 
    \end{tabular}
    }
    \caption{\small Synthesis with BFS($f_{GC}$,$f_1$) and PGP($f_{LM}$,$f_1$), with the same input settings and metrics from Table~\ref{tab:synthesis}.}
    \label{tab:combined_synthesis}
\end{table}

\section{Related work}
Our GP approach is related to previous work that computes {\em generalized heuristics} for guiding state-space search on new classical planning instances of a given domain~\cite{frances2019generalized,staahlberg2021learning,karia2021learning}. However,  $PGP(f_{LM})$ does not aim learning a generalized heuristic but instead, it leverages the classical planning landmark machinery, that was originally conceived for state space search.  We believe that our approach opens the door to incorporating into GP other successful techniques coming from classical planning, e.g.  {\em helpful actions/preferred operators}.

Most of the previous work on GP compute generalized plans that solve, at once, the entire set of classical planning instances given as input.  $PGP(f_{LM})$ implements a progressive approach that, one by one, processes the full batch of classical planning instances in a GP problem. Remarkably our progressive approach overcomes the main drawback of {\em bottom-up/online} approaches for GP~\cite{Winner03distill:learning,srivastava2011new}, which suffer from the complexity of merging a new individual solution with the previously found solutions.

First-order logic (FOL) policies that specify a strategy for solving planning instances have also shown to generalize to planning domains~\cite{khardon1999learning,martin2004learning}. Validating FOL policies over large instances (with large numbers of objects) is difficult because of the complexity of variable matching; our approach has  constant variable matching complexity since {\em planning programs} have no free variables. On the other hand, generalized polices are able to solve problems when actions are not deterministic~\cite{belle2016foundations}, which connects GP to more general notions of planning, such as MDPs and POMDPs~\cite{kolobov2012planning}.
Unlike related work that focus on the computation of generalized policies, our GP approach does not  require knowing the full state space of the input instances~\cite{frances2021learning}, which may easily be too large to be fully specified, or reformulating actions w.r.t.~a pool of features~\cite{bonet2019learning}.

{\em Deep  Reinforcement  Learning}  (DRL) is also used to learn policies~\cite{sutton2018reinforcement}, represented with {\em Deep Neural Networks} (DNNs), that solve sequential decision-making problems, even when symbolic representations of states and actions are not available~\cite{mnih2015human}. Off-the-shelf tools for learning DNNs have also been successfully applied to learn black-box generalized policies, and heuristics, from PDDL  representations~\cite{bueno2019deep,garg2020symbolic, toyer2020asnets}. DNNs are suitable for black box decision-making, but they are difficult to interpret; DNNs represent knowledge as millions of coupled parameters, so it becomes difficult to identify the piece of knowledge responsible for solving a particular task, as well as to understand whether this piece of knowledge will be useful for unseen problems. Last but not least, {\em model-based DRL} approaches have exhibited good performance in several domains~\cite{hafner2020mastering}, but the learned world models are again represented as NNs; solutions are then difficult to interpret and their generalization capacity in the presence of new objects is not evaluated. 

\section{Conclusions}
We presented $PGP(f_{LM})$, a novel heuristic search approach to GP that progressively processes the classical planning instances of a GP problem, and that leverages a landmark counting heuristic to search in the space of planning programs. $PGP(f_{LM})$ allows to transfer landmark counting heuristics, originally conceived for state-space, to the solution-space search of GP. There is still room for improving our $f_{LM}$ heuristic for GP; the information captured by our landmark graphs could be augmented exploiting cyclic dependencies~\cite{buchner2021exploiting}, considering the remaining number of programmable lines~\cite{Laura:TLandmarks:ICAPS2014}, or leveraging different relaxations of the planning instances~\cite{keyder2010sound}. Besides landmarks,  heuristic planners implement complementary ideas such as {\it helpful actions}/{\it preferred operators}~\cite{hoffmann2001ff}, {\it multi-queue best-first search} for multiple heuristics combination~\cite{Helmert:FD:JAIR06}, or {\it novelty-based exploration}~\cite{lipovetzky2012width}.  A promising future research direction is to incorporate into the {\em GP as heuristic search} approach all these techniques that have proved successful for classical planning.

\section*{Acknowledgements}
\begin{footnotesize}
Javier Segovia-Aguas is supported by TAILOR, a project funded by EU H2020 research and innovation programme no. 952215. Sergio Jim\'enez is supported by Spanish grants RYC-2015-18009 and  TIN2017-88476-C2-1-R. Anders Jonsson is partially supported by Spanish grants PID2019-108141GB-I00. Laura Sebastia is supported by TIN2017-88476-C2-1-R and TAILOR-952215. 
\end{footnotesize}

\bibliography{landmarks}

\clearpage
\appendix
\appendixpage

\section{Experimental Setup and Discussion}

\begin{table*}[t!]
\scriptsize
    \centering
    \resizebox{\textwidth}{!}{
    \begin{tabular}{|l|c||c|c||c|c||c|c||c|c|}\hline 
         \multirow{2}{*}{} & \multirow{2}{*}{$n$, $|Z|$} & \multicolumn{2}{c||}{BFS$(f_{GC})$} & \multicolumn{2}{c||}{BFS($f_{LM})$} & \multicolumn{2}{c||}{PGP$(f_{GC})$}& \multicolumn{2}{c|}{PGP$(f_{LM})$}   \\\cline{3-10}
         & & NE (Dead) & SE & NE (Dead) & SE & NE (Dead) & SE & NE (Dead) & SE \\\hline 
         Baking & 13, 6 & TE & TE & TE & TE & ME & ME & {\bf 1.4M} (1.3M) & {\bf 180K} \\
         Corridor & 11, 2 & 120K (33K) & 876K & {\bf 114K} (102K) & {\bf 112K} & 120K (31K) & 181K & 121K (57K) & 129K  \\
         Gripper & 8, 4 & {\bf 63K} (16K) & 461K & 339K (317K) & 223K & {\bf 63K} (16K) & 46K & 336K (314K) & {\bf 22K} \\
         Intrusion & 9, 1 & 1.3M (207K) & 11.3M & {\bf 190} (27) & 1.7K & 1.3M (204K) & 3.3M & {\bf 190} (26) & {\bf 480}   \\ 
         Lock & 12, 2 & TE & TE & TE & TE & ME & ME & {\bf 30K} (4.6K) & {\bf 26K} \\
         Ontable & 11, 3 & TE & TE & TE & TE & 366K (54K) & 312K & {\bf 113K} (38K)& {\bf 75K} \\
         Spanner & 12, 5 & TE & TE & TE & TE & TE & TE  & {\bf 705K} (674K)& {\bf 63K} \\
         Visitall & 7, 2  & 511 (380) & 1.3K & {\bf 249} (154) & 970 & 900 (669) & 581 & 446 (294) & {\bf 335}   \\\hline
    \end{tabular}
    }
    \caption{\small Number of program lines $n$ and pointers $|Z|$. NE stands for nodes evaluated, Dead for nodes marked as deadends, SE for states evaluated, TE and ME for time and memory exceeded, respectively. K is $10^3$ and M is $10^6$. Best results in bold.}
    \label{tab:evaluations}
\end{table*}

Table~\ref{tab:synthesis} focus on reporting data about nodes in the search where every node is a (partial) program. The evaluation of a node consists of first applying it on every input instance, then retrieve the resulting states of each instance, and compute the evaluation functions over them to insert the node in the open queue. Since $BFS$ works on the full set of instances and $PGP$ only on the active set, one interesting observation is how many states are both evaluating. Note that node evaluation may return a failure, e.g. the node is a deadend, in such cases the node evaluation counter increases by one but the state evaluation counter remains unchanged. Table~\ref{tab:evaluations} shows these differences between node evaluations, node deadends and state evaluations for each setting from Table~\ref{tab:synthesis}. From Table~\ref{tab:evaluations} we conclude that $f_{LM}$ almost dominates in all facets of evaluations in the search, being Gripper the only domain where $f_{GC}$ does better in the number of node evaluations; and the best general performance and coverage is found in the $PGP(f_{LM})$ setting, complementing Table~\ref{tab:synthesis} results.

Next we analyze each computed solution by $PGP(f_{LM})$ in Table~\ref{tab:synthesis}, which all have been successfully validated and generalization can be proven \footnote{The experiments are all reproducible by following the code tutorial.}.

\subsection{Baking}
This domain consists of baking cakes in an oven after mixing eggs and flour in a pan, which has to be cleaned before baking the next cake. There are $7$ action schemes, to put an egg or flour in a pan, mix the egg and flour in the pan, put and remove a pan from an oven, bake a cake or clean a pan.\\

\noindent\textit{Inputs:} there are $10$ training instances to bake $n\in[1,10]$ cakes. In the validation set there are $n\in[11,60]$ cakes (a total of $50$ instances).\\

\noindent\textit{Baking solution:} the parameters are pointers to indirectly address typed objects eggs ($z_e$), flour ($z_f$), pan ($z_p$), oven ($z_o$), cake ($z_c$), and soap ($z_s$).
    \begin{lstlisting}[mathescape]
0. putegginpan($z_e$,$z_p$)
1. putflourinpan($z_f$,$z_p$)
2. mix($z_e$,$z_f$,$z_p$)
3. putpaninoven($z_p$,$z_o$)
4. bakecake($z_o$,$z_p$,$z_c$)
5. removepanfromoven($z_p$,$z_o$)
6. cleanpan($z_p$,$z_s$)
7. inc($z_c$)
8. inc($z_e$)
9. inc($z_f$)
10. inc($z_s$)
11. goto(0,$\neg y_z$)
12. end 
    \end{lstlisting} 

\noindent\textit{The baking solution generalizes.} 
\begin{proof}
All pointers start indexing the first typed object of their list. The program puts the first egg and flour in the pan, mix them, then it puts the pan in the oven, bakes the cake, removes the pan from the oven and leaves the pan clean. Then, the cake, egg, flour and soap pointers are increased by one, and the procedure repeats until the soap pointer cannot be increased more. Since, there are as many soap objects as eggs, flours and cakes, this guarantees that all cakes are going to be baked iteratively.
\end{proof}

\subsection{Blocks Ontable}
This domain is a specific instance of the blocksworld domain, where $n$ blocks are arranged into one or multiple towers of blocks with the goal of putting all down onto the table. There are $4$ action schemes to pick up a block from the table, or put it down onto the table, and to stack or unstack a block onto/from another one.\\

\noindent\textit{Inputs:} the training set consists of $12$ instances with $n\in[10,15]$ blocks, and the validation set has $50$ instances with $n\in[16,65]$ blocks.\\

\noindent\textit{Blocks ontable solution:} there are $3$ pointers ($z_{o1}$, $z_{o2}$ and $z_{o3}$) to indirectly address the blocks of each instance.
    \begin{lstlisting}[mathescape]    
0. unstack($z_{o1}$,$z_{o2}$)
1. inc($z_{o2}$)
2. goto(0,$\neg y_z$)
3. put-down($z_{o1}$)
4. clear($z_{o2}$)
5. inc($z_{o1}$)
6. goto(0,$\neg y_z$)
7. clear($z_{o1}$)
8. inc($z_{o3}$)
9. goto(0,$\neg y_z$)
10. end
    \end{lstlisting}

\noindent\textit{The blocks ontable solution generalizes.} 
\begin{proof}
First, it tries to unstack the first object $z_{o1}$ from any other object $z_{o2}$ and put it down onto the table. Then, it resets $z_{o2}$ and increase $z_{o1}$ by one, and repeats until it has tried all unstacks and putdowns between two objects. In the worst case, there is a single tower in reverse order, an every time that Lines 0-6 are executed, only one block is put down onto the table. Since the number of blocks onto the table are increased at least by one when applying Lines 0-6, executing that partial program $n$ times (where $n$ is the number of blocks) guarantees to have all blocks onto the table, which is what Lines 7-9 do.
\end{proof}

\subsection{Corridor}
In this domain there is only one action schema to move between two adjacent locations of a corridor. The initial state and goal are two arbitrary locations, and requires an extra predicate in the initial state that is a copy of the ``at'' in the goal for generalization. \\

\noindent\textit{Inputs:} consist of corridors of length $n$, with $10$ training instances where $n\in[5,14]$, and $50$ validation instances where $n\in[12,61]$. \\

\noindent\textit{Corridor solution:} pointers $z_1$ and $z_2$ are used to indirectly address location objects.
    \begin{lstlisting}[mathescape]  
0. inc($z_1$)
1. move($z_2$,$z_1$)
2. inc($z_1$)
3. inc($z_2$)
4. goto(1,$\neg y_z$)
5. move($z_1$,$z_2$)
6. set($z_1$,$z_2$)
7. dec($z_2$)
8. test(goal-at($z_1$))
9. goto(4,$y_z$)
10. end
    \end{lstlisting} 

\noindent\textit{The corridor solution generalizes.} 
\begin{proof}
Line 0 is applied only once, setting $z_2$ and $z_1$ to the first and second locations, respectively. After, it moves from $z_2$ to $z_1$ (right move), increases both pointers, and repeats until the agent is at the rightmost location. Then, the program will finish with Lines 5-9 if the agent is at the goal location, otherwise it will repeat the procedure of moving one location to the left until the goal location is reached, applying in the worst case twice as many planning actions as the size of the corridor.
\end{proof}

\subsection{Gripper}
There are 3 action schemes: to move between the two rooms, to pick a ball with a gripper from a location, and to drop a ball in a location with the gripper that holds the ball. The goal is to move all balls from room A to room B by using left/right grippers.\\

\noindent\textit{Inputs:} the training set has $10$ instances with $n\in[2,11]$ balls, always starting in room A. The validation set has $50$ instances with $n\in[12,61]$ balls.\\

\noindent\textit{Gripper solution:} the room pointers are $z_{r1}$ and $z_{r2}$, the ball pointer is $z_b$ and the gripper pointer is $z_g$.
    \begin{lstlisting}[mathescape]    
0. pick($z_b$,$z_{r1}$,$z_g$)
1. inc($z_{r2}$)
2. move($z_{r1}$,$z_{r2}$)
3. drop($z_b$,$z_{r2}$,$z_g$)
4. move($z_{r2}$,$z_{r1}$)
5. inc($z_b$)
6. goto(0,$\neg y_z$)
7. end 
    \end{lstlisting}

\noindent\textit{The gripper solution generalizes.} 
\begin{proof}
The sequence of actions for one ball (Lines 0-3) is to pick up the first ball $z_b$ with the first gripper $z_g$ (left) at room $z_{r1}$ (room A), to increase the pointer $z_{r2}$ to point to room B, to move from room A to room B, and to drop the ball $z_b$ that $z_g$ holds at room B. Then, to go for the next ball (Lines 4-6), it moves back to room A, and increase the pointer $z_b$ by one until there are no more balls.  Note that $z_{r2}$ is increased in every loop but it is applicable only once, thus after the second step it always points to room B while $z_{r1}$ always points room A. Since this procedure works for one ball, leaves the robot at room A, and runs iteratively for each ball, generalizing to any number of balls.
\end{proof}

\subsection{Intrusion}
In this domain there is 9 action schemes related to a cyber-security intrusion, all parameterized for one specific host. It can recon a host, gather information and obtain access, modify the files of the system, clean the logs, vandalize the host, gain root access, download files and steal data. There could be multiple goals, but we focus on stealing the data of all hosts, which makes some actions to be unhelpful for generalization. \\

\noindent\textit{Inputs:} there are $10$ training instances with  $n\in[1,10]$ hosts, and a validation set of $50$ instances with $n\in[11,60]$ hosts. \\

\noindent\textit{Intrusion solution:} pointer $z_h$ addresses the current host.
    \begin{lstlisting}[mathescape]
0. recon($z_h$)
1. break-into($z_h$)
2. clean($z_h$)
3. gain-root($z_h$)
4. download-files($z_h$)
5. steal-data($z_h$)
6. inc($z_h$)
7. goto(0,$\neg y_z$)
8. end     
    \end{lstlisting}

\noindent\textit{The intrusion solution generalizes.} 
\begin{proof}
The steps to steal data from a host $z_h$ are shown in Lines 0-5, once the data is stolen for $z_h$, it goes for the next host increasing $z_h$ by one, and going back to the first line until there are no more hosts. This procedure steals the data of all hosts, no matter the number of hosts.
\end{proof}

\subsection{Lock}
This is the example domain, where an agent starts in an arbitrary location of a corridor, then it has to pick up a key in the rightmost location,  and move back to the leftmost location to open a lock with the key. The domain has one action schema to move between locations, and 3 for pick up a key from a location, drop the key in a given location, and open a lock with the key.\\

\noindent\textit{Inputs:} the training set has $10$ instances with corridors of length $n\in[5,14]$, where the lock and the key are always in the leftmost and rightmost cells, respectively. The validation set has $50$ instances with $n\in[12,61]$.\\

\noindent\textit{Lock solution:} pointers $z_1$ and $z_2$ are used to indirectly address locations of the corridor.
    \begin{lstlisting}[mathescape]
0. inc($z_1$)
1. move($z_2$,$z_1$)
2. inc($z_1$)
3. inc($z_2$)
4. goto(1,!$y_z$)
5. pickup($z_1$)
6. dec($z_1$)
7. move($z_1$,$z_2$)
8. dec($z_2$)
9. goto(5,!$y_z$)
10. open($z_1$)
11. end
    \end{lstlisting}

\noindent\textit{The lock solution generalizes} 
\begin{proof}
Lines 0-4 moves the agent to the rightmost location, then it picks up the key (Line 5), and in Lines 6-9 it moves to the leftmost location (note that it tries to pick up the key in every location when going to the left, but only the first one is applicable). Once in the leftmost location, it opens the lock (Line 10). This applies no matter the size of the corridor.
\end{proof}

\subsection{Spanner}
In this domain a man starts in the leftmost location of a corridor, and must pick all spanners on its way to the rightmost location, and tighten all nuts with the collected spanners. Two of the main difficulties are that spanners can only be used once and the man can only traverse the corridor in one direction, so it is a deadend whenever the man does not pick a spanner in a location and moves to the next location. There are 3 action schemas in this domain, a man can walk to the next adjacent location, pick up spanners and tighten nuts with spanners. \\

\noindent\textit{Inputs:} consists of $10$ training instances of corridors of length $n \in [1,10]$, and $2\times n$ spanners and nuts. The validation set has $50$ instances with $n\in[12,61]$.\\

\noindent\textit{Spanner solution:} the location pointers are $z_{l1}$ and $z_{l2}$, for spanners there is a pointer $z_s$, for nuts a pointer $z_n$ and for the man is the pointer $z_m$.
    \begin{lstlisting}[mathescape]    
0. pickup_spanner($z_{l1}$,$z_s$,$z_m$)
1. tighten_nut($z_{l1}$,$z_s$,$z_m$,$z_n$)
2. inc($z_n$)
3. inc($z_s$)
4. goto(0,$\neg y_z$)
5. inc($z_{l1}$)
6. walk($z_{l2}$,$z_{l1}$,$z_m$)
7. clear($z_n$)
8. clear($z_s$)
9. inc($z_{l1}$)
10. goto(0,$\neg y_z$)
11. end
    \end{lstlisting}

\noindent\textit{The spanner solution generalizes.} 
\begin{proof}
First, there are as many spanners as nuts, so in Lines 0-4 the man tries to pick up a spanner and use it to tighten a nut (note that inapplicable tightens make the man to keep the spanners), and continues with the next spanner and nut, until pointer $z_s$ visits all spanners. Then the man moves to the next adjacent location and resets $z_s$ and $z_n$ (Lines 5-8), and continues in the first line until $z_{l1}$ can not be increased more (Lines 9-10). In summary, in every location $z_{l1}$ the man picks up all spanners from that location, moves to the next location and use the available spanners to tighten all loose nuts when applicable. Thus, iff the initial state is not a dead-end, it generalizes no matter the distribution of spanners and nuts or the length of the corridor.
\end{proof}

\subsection{Visitall}
Starting from the bottom-left corner, an agent must visit all cells of an $n\times n$ grid. There is only one action schema  ``visit'' to mark a cell as visited. \\

\noindent\textit{Inputs:} there are $10$ training instances of $n\times n$ grid sizes such that $n\in[2,11]$. In the validation, there are $50$ instances where $n\in[12,61]$. \\

\noindent\textit{Visitall solution:} pointer $z_i$ is used for row objects, and $z_j$ for column objects.
    \begin{lstlisting}[mathescape]    
0. visit($z_i$,$z_j$)
1. inc($z_i$)
2. goto(0,$\neg y_z $)
3. clear($z_i$)
4. inc($z_j$)
5. goto(0,$\neg y_z$)
6. end
    \end{lstlisting}

\noindent\textit{The visitall solution generalizes.} 
\begin{proof}
Location $(0,0)$ is visited when applying the first instruction (Line 0), then it increases $z_i$ by one and repeats until there are no more rows for $z_j=0$, so it visits all cells of the first column (Lines 0-2). After that, it goes back to the first row, moves to the next column, and continues from the first program line until there are no more columns (Lines 3-5). This program visits all the cells in the grid, row by row, so it generalizes no matter the size of the grid. 
\end{proof}

\section{Evaluation of PGP($f_{\left\lVert LM \right\rVert}$)}
One of the main drawbacks of $f_{LM}$ is the searching bias towards programs that reach more landmarks for certain instances. One way to mitigate this effect is to normalize the weight of each instance by its total number of landmarks. We call this the {\em normalized landmarks} heuristic, which works with a precision of $4$ digits. However, the outcomes of this new normalized heuristic is exactly the same as $f_{LM}$ in the given set of generalized planning problems, which means that even though $f_{LM}$ might be biased from input data, this is not observed in the randomly generated datasets.

\end{document}